\begin{document}
\def\mathbi#1{\textbf{\em #1}}
\newcommand{\intL}{\int\limits}
\newcommand{\half}{^\infty_0 }
\newcommand{\intR}{\int\limits_{\mathbb{R}} }
\newcommand{\intRR}{\int\limits_{\mathbb{R}^2} }
\newcommand{\RR}{\mathbb{R}}
\newcommand{\xx}{\mathbf{x}}
\newcommand{\yy}{\mathbf{y}}
\newcommand{\zz}{\mathbf{z}}
\newcommand{\ww}{\mathbf{w}}
\newcommand{\uu}{\mathbf{u}}
\newcommand{\xb}{\textbf{\textit{x}}}
\newcommand{\yb}{\textbf{\textit{y}}}
\newcommand{\zb}{\textbf{\textit{z}}}
\newcommand{\pb}{\textbf{\textit{p}}}
\newcommand{\ttheta}{\boldsymbol{\theta}}
\newcommand{\pphi}{\boldsymbol{\phi}}
\newcommand{\oomega}{\boldsymbol{\omega}}
\newcommand{\eeta}{\boldsymbol{\eta}}
\newcommand{\zzeta}{\boldsymbol{\zeta}}
\newcommand{\ttau}{\boldsymbol{\tau}}
\newcommand{\xxi}{\boldsymbol{\xi}}
\newcommand{\ReLU}{\sigma}
\newcommand{\rmd}[1]{\mathrm d#1}
\newtheorem{thm}{Theorem}
\newtheorem{defi}[thm]{Definition}
\newtheorem{rmk}[thm]{Remark}
\newtheorem{cor}[thm]{Corollary}
\newtheorem{lem}[thm]{Lemma}
\newtheorem{prop}[thm]{Proposition}

\title{Error estimate for a universal function approximator of ReLU network with a local connection}
\author{Jae-Mo Kang$^a$ and Sunghwan Moon$^b$}

\date{${}^a$Department of Artificial Intelligence, ${}^b$Department of Mathematics,\\
Kyungpook National University, Daegu 41566, Republic of Korea \\
{\tt sunghwan.moon@knu.ac.kr}}

\maketitle
\begin{abstract}
Neural networks have shown high successful performance in a wide range of tasks, but further studies are needed to improve its performance.
We analyze the approximation error of the specific neural network architecture with a local connection and higher application than one with the full connection because the local-connected network can be used to explain diverse neural networks such as CNNs.
Our error estimate depends on two parameters: one controlling the depth of the hidden layer, and the other, the width of the hidden layers.\\

\smallskip\noindent{\bf Keywords:}
Deep neural nets, ReLU network, approximation theory, universal
\end{abstract}

%
%
%
\section{Introduction}
Neural networking has achieved outstanding performance in a wide range of areas such as computer vision and natural language processing (see the review article \cite{lecunbh15} and recent book \cite{goodfellowbc16}  for more background), and the question of why they work so well have naturally attracted much attention.
\cite{barron94,cybenko89,funahashi89,horniksw89} have already answered the question about a network with a single hidden layer in detail. A network with a single hidden layer can approximate any continuous function with compact support to arbitrary accuracy when the width goes to infinity. This result is referred to as \textit{the universal approximation theorem}.

This classical universal approximation theorems focused on the width increasing to infinity, namely the fat network. However, the recent tremendous success of the neural network originates from the larger and deeper network structure. The literature has reported the theoretical support for the deep neural network  \cite{bauerk19,cohenss16,eldans16,lupwhw17,linj18,mhaskarp16,ohnk19,telgarsky16,hanins18,hanin19}.

Many literature error estimates for approximators exist \cite{bauerk19,hanin19,montanellid19,yarotsky17}. Yarotsky provides the $L^\infty$-error of approximation of functions belonging to a certain Sobolev space with a fully connected ReLU network \cite{yarotsky17}. Hanin gives quantitative depth estimates on only the length of hidden layers for the rate of approximation of any continuous function by fully connected ReLU networks with a bounded width \cite{hanin19}. Also, Bauer and Kohler show that least squares estimates based on multilayer feedforward neural networks allow to circumvent the curse of dimensionality in nonparametric regression in \cite{bauerk19}.

In recent practical applications of deep learning, special network structures involving several local connections, such as CNNs, were also widely adopted. 
Many works have tried to answer the question for the case of feedforward neural networks with full connection; however, not many literatures  have been answered the question yet for neural networks with a local connection \cite{zhou20nn,zhou20uni}.
 The main contributions of this paper are summarized as follows:
\begin{itemize}
\item We analyze the approximation error of the specific neural network architecture with a local connection.
\item Notably, the analytical results derived in this paper show that the locally-connected networks (CNNs) are universal function approximators. 
\item Through our analysis, we provide new, interesting, useful, and helpful engineering insights to understand the approximation performance of the network architecture with local connections, and why such network architecture is working well.
\end{itemize}

\subsection{Notations}
Let $\mathbb Z_k=\{-k,-k+1,\cdots ,k\}$, $g_1(x)=\ReLU(1-|x|)$ where $ \ReLU(x)=\max\{x,0\}$ and $g_n(\xx)=\prod_{j=1}^ng_1(x_j),\xx=(x_1,x_2\cdots,x_n)\in \RR^n$ (see Figure \ref{fig:h}).
 \begin{figure*}
\centering
\includegraphics[width=0.25\textwidth] {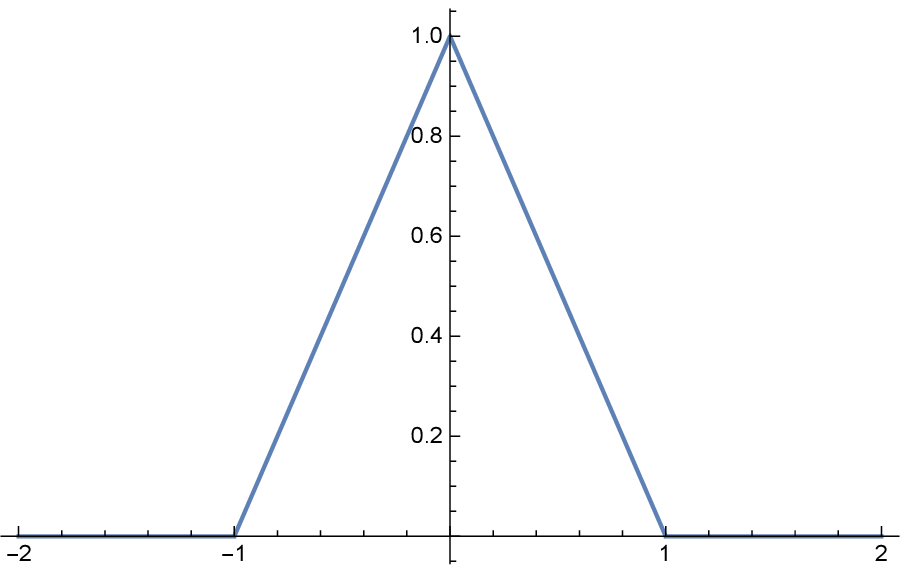}
\includegraphics[width=0.25\textwidth] {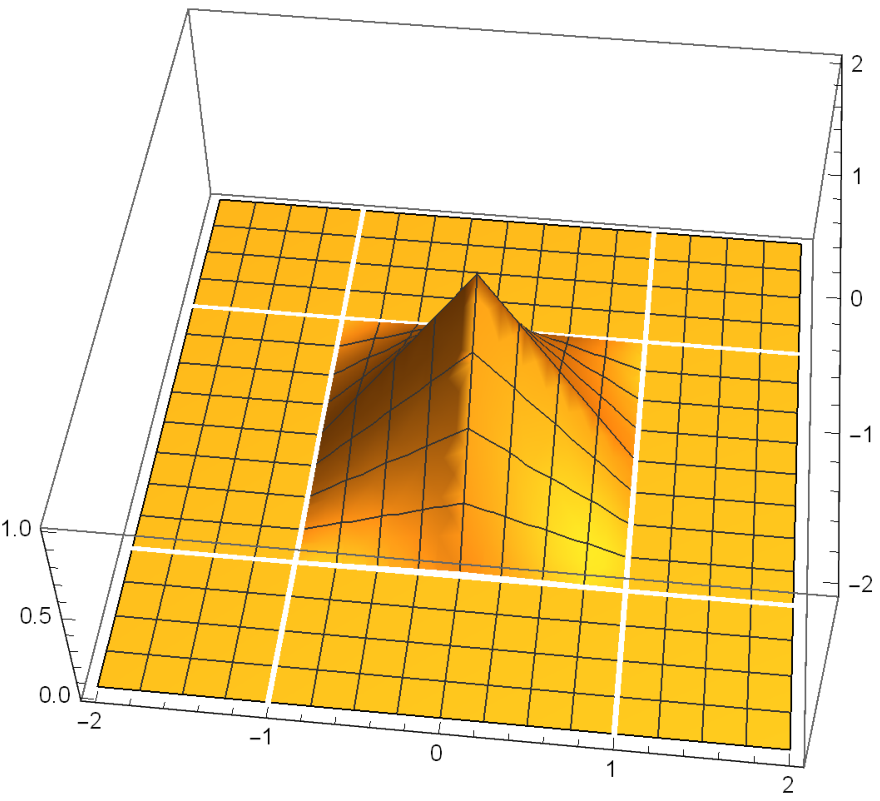}
\includegraphics[width=0.2\textwidth] {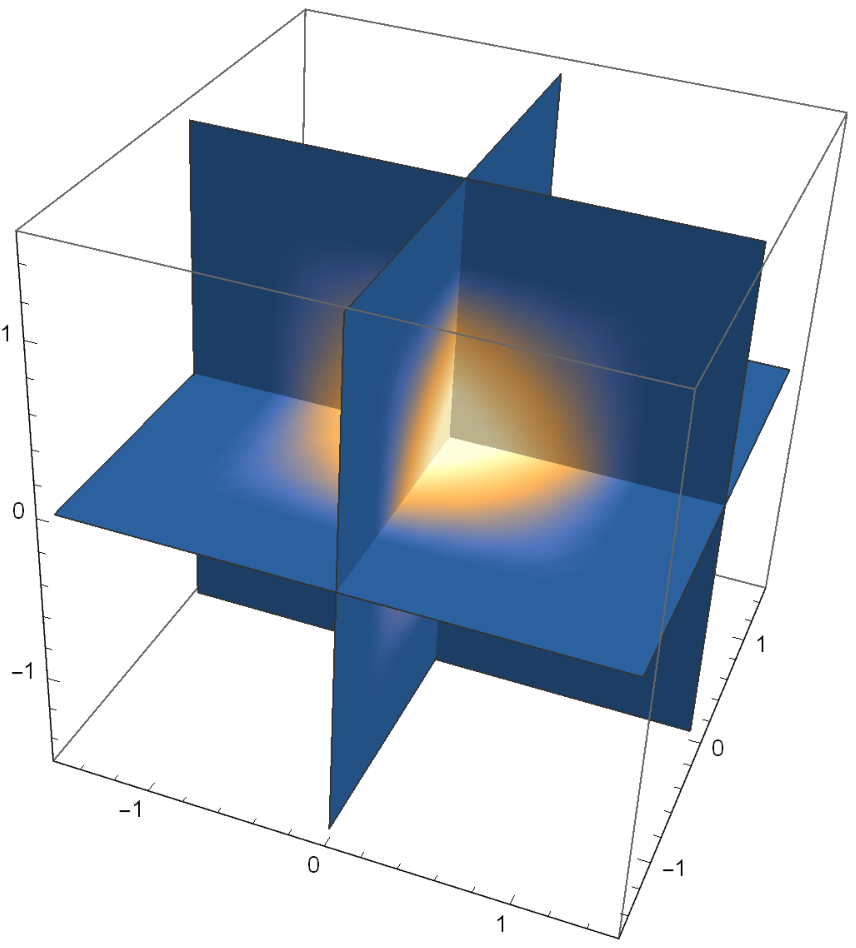}
\includegraphics[width=0.03\textwidth] {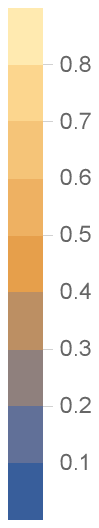}
\caption{ Left: $g_1(x)$ Middle: $g_2(\xx)$ Right: $g_3(\xx)$}
\label{fig:h}
\end{figure*}
The network architecture $(L,\mathbf n)$ consists of several hidden layers $L\in\mathbb N$ and a width vector $\mathbf n=(n_0,n_1,\cdots,n_{L+1})\in\mathbb N^{L+2}.$ A neural network with network architecture $(L,\mathbf n)$ is then any function of the form
\begin{equation}\label{eq:netwrokfunction}
\begin{array}{ll}
f_{(L,\mathbf n)}:\RR^{n_0}\to \RR^{n_{L+1}},\\
f_{(L,\mathbf n)}(\xx)=W_{L+1} \circ \ReLU \circ W_L \circ \ReLU \circ \cdots \circ  W_2 \circ \ReLU \circ  W_1(\xx)
\end{array}
\end{equation}
where $\ReLU(\xx)=(\ReLU(x_1),\ReLU(x_2),\cdots,\ReLU(x_n))$, $W_i=(w^{(i)}_{lm})\in\RR^{n_i}\times\RR^{n_{i-1}+1}$ and $W(\xx):\RR^{n_{i-1}}\to \RR^{n_i}$ is defined by
$$
\begin{array}{ll}
\displaystyle W_i(\xx)=\left(\sum_{m=1}^{n_{i-1}}w_{1m}x_{m}+w_{1,n_{i-1}+1},\sum_{m=1}^{n_{i-1}}w_{2m}x_{m}+w_{2,n_{i-1}+1},\right.\left.\cdots,\sum_{m=1}^{n_{i-1}}w_{n_im}x_{m}+w_{n_i,n_{i-1}+1}\right).
\end{array}
$$

Here we set $n_0=n$ and $n_{L+1}=1$.
To this end, we define the space of network functions with the given network architecture
$$
\mathcal F(L,\mathbf n):=\{f \mbox{ of the form \eqref{eq:netwrokfunction}}\}.
$$
\section{Main Results}

\begin{thm}\label{thm:error}
Let $f\in C^1(\RR^n)$ with $\operatorname{supp}f\subset[-1,1]^n$ and
$$
(f*g_n)_k(\xx)=\sum^{}_{\mathbf{i}\in\mathbb Z_k^n}f(\mathbf{i}k^{-1})g_{n}(k\xx-\mathbf i).
$$
Then we have
$$
|f(\xx)-(f*g_n)_k(\xx)|\le \max_{j=1,2,\cdots,n}||\partial_{x_j}f(\cdot)||_\infty n (2k)^{-1}.
$$
\end{thm}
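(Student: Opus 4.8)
The plan is to exploit the fact that the integer translates $\{g_1(\cdot-i)\}_{i\in\mathbb Z}$ form a partition of unity on $\RR$, so that their tensor products $\{g_n(\cdot-\mathbf i)\}_{\mathbf i\in\mathbb Z^n}$ form one on $\RR^n$; then $(f*g_n)_k$ is nothing but the piecewise multilinear interpolant of $f$ on the lattice $k^{-1}\mathbb Z^n$, and the claim becomes a standard interpolation-error estimate. First I would record that for $x\in[m,m+1]$ at most two translates are nonzero, namely $g_1(x-m)=1-(x-m)$ and $g_1(x-m-1)=x-m$, whose sum is $1$; taking products over coordinates gives $\sum_{\mathbf i\in\mathbb Z^n}g_n(\xx-\mathbf i)=1$. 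Next, for $\xx\in[-1,1]^n$ any nonzero term $g_n(k\xx-\mathbf i)$ forces $|kx_j-i_j|<1$ and hence $|i_j|\le k$, so the sum over $\mathbb Z_k^n$ already contains every nonzero term and $\sum_{\mathbf i\in\mathbb Z_k^n}g_n(k\xx-\mathbf i)=1$. (For $\xx\notin[-1,1]^n$ one checks that every surviving index has some $i_j=\pm k$, where $f(\mathbf i k^{-1})=0$ since $f$ vanishes on $\partial[-1,1]^n$ by continuity and the support hypothesis; so both sides of the inequality are $0$, and the content of the theorem is the case $\xx\in[-1,1]^n$.)

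Using the partition of unity I would then write $f(\xx)-(f*g_n)_k(\xx)=\sum_{\mathbf i\in\mathbb Z_k^n}\bigl(f(\xx)-f(\mathbf i k^{-1})\bigr)g_n(k\xx-\mathbf i)$ and decompose each increment one coordinate at a time: with $\zz^{(0)}=\xx$ and $\zz^{(j)}=(i_1k^{-1},\dots,i_jk^{-1},x_{j+1},\dots,x_n)$, the fundamental theorem of calculus gives $|f(\zz^{(j-1)})-f(\zz^{(j)})|\le\|\partial_{x_j}f(\cdot)\|_\infty\,|x_j-i_jk^{-1}|$, hence $|f(\xx)-f(\mathbf i k^{-1})|\le\sum_{j=1}^n\|\partial_{x_j}f(\cdot)\|_\infty\,|x_j-i_jk^{-1}|$. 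Substituting and interchanging the finite sums reduces everything to bounding, for each $j$, the weighted quantity $\sum_{\mathbf i\in\mathbb Z_k^n}|x_j-i_jk^{-1}|\,g_n(k\xx-\mathbf i)$. Since $g_n$ factors as a product of $g_1$'s and $\xx\in[-1,1]^n$, summing out the coordinates $l\ne j$ uses the one-dimensional partition of unity and collapses this to $\sum_{i_j\in\mathbb Z}|x_j-i_jk^{-1}|\,g_1(kx_j-i_j)$.

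Finally I would evaluate this one-dimensional sum exactly: writing $y=kx_j$, $m=\lfloor y\rfloor$, $t=y-m\in[0,1)$, only $i_j=m$ and $i_j=m+1$ contribute, yielding $k^{-1}\bigl(t(1-t)+(1-t)t\bigr)=k^{-1}\cdot 2t(1-t)\le(2k)^{-1}$ because $t(1-t)\le 1/4$. Summing over $j$ gives $|f(\xx)-(f*g_n)_k(\xx)|\le\sum_{j=1}^n\|\partial_{x_j}f(\cdot)\|_\infty(2k)^{-1}\le n\max_j\|\partial_{x_j}f(\cdot)\|_\infty(2k)^{-1}$. The one place to be careful — where a careless estimate loses a factor of two, giving $nk^{-1}$ instead of the sharper $n(2k)^{-1}$ — is to keep the weights $g_n(k\xx-\mathbf i)$ inside the bound and perform the exact computation $2t(1-t)\le 1/2$, rather than crudely bounding $|x_j-i_jk^{-1}|\le k^{-1}$ termwise; the other bookkeeping point is the reduction of the $n$-dimensional weighted sum to a one-dimensional one via the product structure of $g_n$.
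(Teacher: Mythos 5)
Your proof is correct and follows essentially the same route as the paper: the partition-of-unity identity $\sum_{\mathbf i\in\mathbb Z_k^n}g_n(k\xx-\mathbf i)=1$ together with the weighted estimate $\sum_{\mathbf i\in\mathbb Z_k^n}\sum_{j=1}^n|x_j-i_jk^{-1}|\,g_n(k\xx-\mathbf i)\le n(2k)^{-1}$, a first-order (mean value / fundamental theorem of calculus) expansion of $f(\mathbf i k^{-1})-f(\xx)$, and the exact one-dimensional computation $2t(1-t)\le 1/2$. The only differences are cosmetic: you obtain the weighted estimate by summing out coordinates via the product structure of $g_n$ rather than by the paper's induction on $n$, and you explicitly handle $\xx\notin[-1,1]^n$, a point the paper leaves implicit.
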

Appendix \ref{sec:proof} presents the proof.
Let us denote by $||f||_\infty=\max\{ |f(\xx)|:\xx\in[0,1]^n\}$ and  $\lceil x\rceil=[x]+1$, where $[x]$ is a Gauss symbol.


\begin{thm}
Let $ f\in C^1(\RR^n)$ with $\operatorname{supp}f\subset[0,1]^n$ and $m,k\in\mathbb N$.
For $n\ge2$, there is a network function $f_{(L,\mathbf n)}\in \mathcal F((m+5)\lceil \log_2 n\rceil +3,(n,3n(2k+1), n(2k+1),6n(2k+1)^n,\cdots,6n(2k+1)^n ,(2k+1)^n,1)) $   such that
\begin{equation}\label{eq:estimate}
\begin{array}{ll}
\displaystyle \sup_{\xx\in[0,1]^n}|f(\xx)-f_{(L,\mathbf n)}(\xx)|\\
\le \displaystyle  ||f||_\infty3^n2^{-m}(2k+1)+ \max_{j=1,2,\cdots,n}||\partial_{x_j}f(\xx)||_\infty n (2k)^{-1},
\end{array}
\end{equation}
where $L=(m+5)\lceil \log_2 n\rceil +3$ and $\mathbf n=(n,3n(2k+1), n(2k+1),6n(2k+1)^n,\cdots,6n(2k+1)^n ,(2k+1)^n,1)$.
For $n=1$, there is a network function $f_{(2,(1,3(2k+1),2k+1,1))}\in \mathcal F(2,(1,3(2k+1),2k+1,1)) $ such that
\begin{equation*}
\sup_{\xx\in[0,1]^n}|f(\xx)-f_{(2,(1, 3(2k+1),2k+1,1))}(\xx)|\le ||f'||_\infty (2k)^{-1}.
\end{equation*}
\end{thm}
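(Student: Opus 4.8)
\emph{Proof plan.} The plan is to route the error through the mollified sum $(f*g_n)_k$ and then realize a close approximation of $(f*g_n)_k$ by a network of the prescribed architecture. By the triangle inequality,
$$
\sup_{\xx\in[0,1]^n}|f(\xx)-f_{(L,\mathbf n)}(\xx)|\le \sup_{\xx\in[0,1]^n}|f(\xx)-(f*g_n)_k(\xx)|+\sup_{\xx\in[0,1]^n}|(f*g_n)_k(\xx)-f_{(L,\mathbf n)}(\xx)|.
$$
Since $\operatorname{supp}f\subset[0,1]^n\subset[-1,1]^n$, Theorem \ref{thm:error} bounds the first summand by $\max_{j}\|\partial_{x_j}f\|_\infty\, n(2k)^{-1}$, which is the second term in \eqref{eq:estimate}. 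So it remains to build $f_{(L,\mathbf n)}$ making the second summand at most $\|f\|_\infty 3^n2^{-m}(2k+1)$.

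First I would realize every building block $g_1(kx_j-i_j)$ exactly. From the identity $g_1(y)=\ReLU(y+1)-2\ReLU(y)+\ReLU(y-1)$, the first hidden layer (width $3n(2k+1)$) computes $\ReLU(kx_j-i_j\pm1)$ and $\ReLU(kx_j-i_j)$ for all $j\in\{1,\dots,n\}$ and all $i_j\in\mathbb Z_k$, and the second hidden layer (width $n(2k+1)$) takes the appropriate linear combinations to output $g_1(kx_j-i_j)$; because $g_1\ge0$, the ReLU applied after this layer acts as the identity. For $n=1$ this already suffices: one further linear map returns $\sum_{i\in\mathbb Z_k}f(ik^{-1})g_1(kx-i)=(f*g_1)_k(x)$, so $f_{(2,(1,3(2k+1),2k+1,1))}=(f*g_1)_k$ and Theorem \ref{thm:error} with $n=1$ gives the claimed bound $\|f'\|_\infty(2k)^{-1}$.

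For $n\ge2$ the remaining task is to approximate, for each multi-index $\mathbf i\in\mathbb Z_k^n$, the product $g_n(k\xx-\mathbf i)=\prod_{j=1}^n g_1(kx_j-i_j)$ of $n$ numbers in $[0,1]$. I would run, in parallel over the $(2k+1)^n$ multi-indices, a balanced binary multiplication tree whose internal nodes are Yarotsky-type two-factor multiplication gadgets \cite{yarotsky17} assembled from the sawtooth approximation of $t\mapsto t^2$: each gadget has width $O(1)$ and depth $m+5$, keeps intermediate quantities in $[0,1]$, and approximates the product of its two inputs to within $O(2^{-m})$. Stacking the $\lceil\log_2 n\rceil$ tree levels on top of the two preprocessing layers and appending one layer of width $(2k+1)^n$ to hold the approximate products $\widetilde g_n(k\xx-\mathbf i)$ produces exactly $L=(m+5)\lceil\log_2 n\rceil+3$ hidden layers, with the product layers of width at most $6n(2k+1)^n$, and the output layer forms $f_{(L,\mathbf n)}(\xx)=\sum_{\mathbf i\in\mathbb Z_k^n}f(\mathbf i k^{-1})\widetilde g_n(k\xx-\mathbf i)$. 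Then
$$
|(f*g_n)_k(\xx)-f_{(L,\mathbf n)}(\xx)|\le\|f\|_\infty\sum_{\mathbf i\in\mathbb Z_k^n}\big|g_n(k\xx-\mathbf i)-\widetilde g_n(k\xx-\mathbf i)\big|,
$$
and, since for fixed $\xx$ one has $g_1(kx_j-i_j)\ne0$ for at most two values of $i_j$, only $O(2^{n-1}(2k+1))$ of the summands can be non-negligible; combining this count (bounded crudely by $3^n(2k+1)$) with the $O(2^{-m})$ per-product error propagated through the tree yields $\|f\|_\infty 3^n2^{-m}(2k+1)$.

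The main obstacle is this middle construction: making the multiplication sub-network have depth exactly $m+5$ per tree level and total width within $6n(2k+1)^n$, and — more delicately — propagating the pairwise error of order $2^{-m}$ through the depth-$\lceil\log_2 n\rceil$ tree while controlling how far $\widetilde g_n(k\xx-\mathbf i)$ can stray from zero where $g_n(k\xx-\mathbf i)=0$, since it is that deviation, summed over $\mathbf i$, that must be driven down to the stated constant $3^n(2k+1)$ rather than the much larger $(2k+1)^n$.
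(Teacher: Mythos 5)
Your plan follows the same route as the paper: the triangle inequality through $(f*g_n)_k$ with Theorem \ref{thm:error} handling the first term, two exact preprocessing layers producing all $g_1(kx_j-i_j)$ (so your $n=1$ case is complete and identical to the paper's), then, for each $\mathbf i\in\mathbb Z_k^n$, an approximate product network of depth $(m+5)\lceil\log_2 n\rceil$ and width $6n$ run in parallel over the $(2k+1)^n$ multi-indices, finished by the linear layer with coefficients $f(\mathbf i k^{-1})$. The paper does not re-derive the Yarotsky-type binary multiplication tree; it cites it wholesale as Lemma \ref{lem:mul} (Lemma A3 of Schmidt-Hieber), which already delivers exactly the depth $(m+5)\lceil\log_2 n\rceil$, width $6n$, and accuracy $3^n2^{-m}$ that you propose to re-establish by propagating pairwise errors, so that portion of your ``main obstacle'' is available off the shelf and your depth/width bookkeeping matches the stated architecture.

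The genuine gap is the final counting step, which you correctly single out but do not close. Two remarks. First, your count is off: for fixed $\xx$ at most two values of $i_j$ per coordinate give $g_1(kx_j-i_j)\neq0$, hence at most $2^n$ (not $O(2^{n-1}(2k+1))$) multi-indices have $g_n(k\xx-\mathbf i)\neq0$; and, as you yourself observe, this count is beside the point unless the approximate product is forced to vanish (or be uniformly tiny) at the remaining indices, a property that the error bound of Lemma \ref{lem:mul} as quoted does not provide (Schmidt-Hieber's construction does satisfy $\operatorname{Mult}_m^r(\xx)=0$ whenever some $x_j=0$, but even invoking that extra property yields a factor $2^n$, i.e.\ a bound of the form $\|f\|_\infty 6^n2^{-m}$, not the stated $3^n2^{-m}(2k+1)$). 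Second, be aware that the paper does not close this step either: its displayed chain bounds each of the $(2k+1)^n$ summands by $\|f\|_\infty3^n2^{-m}$ and then writes the total as $\|f\|_\infty3^n2^{-m}(2k+1)$, which as written only justifies a factor $(2k+1)^n$. So your proposal reproduces the paper's argument up to and including its weakest point; to obtain a bound of the advertised form you would need an additional localization property of the product networks (or accept a factor such as $2^n$ or $(2k+1)^n$ in place of $2k+1$).
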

This \eqref{eq:estimate} can be written as
\begin{equation*}\label{eq:estimateO}\sup_{\xx\in[0,1]^n}|f(\xx)-f_{(L,\mathbf n)}(\xx)|=O(2^{-L/\lceil \log_2 n\rceil}k)+O((k^n)^{-1/n}).\end{equation*}
Here, $L$ is the  number of hidden layers, and $6n(2k+1)^n$ is the number of nodes of hidden layers.
If $k^2\approx 2^{L/\lceil \log_2 n\rceil}$, then
\begin{equation*}\label{eq:estimate0O}\sup_{\xx\in[0,1]^n}|f(\xx)-f_{(L,\mathbf n)}(\xx)|=O(2^{-L/(2\lceil \log_2 n\rceil)}).\end{equation*}

Theorem 2 shows that the approximation error (or accuracy) of the constructed network architecture depends on two factors: the number $L$ of the hidden layers (i.e., depth) of the network, and the parameter $k$ that is related to the maximum number $6n(2k+1)^n$ among the nodes of the hidden layers. Specifically, as shown in \eqref{eq:estimate}, the approximation error is proportional to these two factors and, more precisely, the higher increase of the number $L$ of the hidden layers than the number of nodes of the hidden layers is more efficient to achieve sufficiently high accuracy for function approximation. Notably, Theorem 2 implies that the specific network architecture with local connections, which should provide higher use than the one with the full connection because the locally-connected one is 1D CNN, is a universal function approximator.

%
\begin{rmk}
We can have a sharper estimate using $\operatorname{supp}f\subset[0,1]^n$.
When computing $(f*g_n)_k$, we need only half, i.e.
$$
(f*g_n)_k(\xx)=\sum^{}_{\mathbf{i}\in\mathbb Z_k^n}f(\mathbf{i}k^{-1})g_{n}(k\xx-\mathbf i)=\sum^{}_{\mathbf{i}\in(\mathbb N_k^{*})^n}f(\mathbf{i}k^{-1})g_{n}(k\xx-\mathbf i),
$$
where $\mathbb N_k^*=\{0,1,\cdots ,k\}$.
Thus, there is a network function of $f_{(L,\mathbf n)}\in \mathcal F((m+5)\lceil \log_2 n\rceil +3,(n,3n(k+1), n(k+1),6n(k+1)^n,\cdots,6n(k+1)^n ,(k+1)^n,1)) $ with
$$
\begin{array}{ll}
\displaystyle\sup_{\xx\in[0,1]^n}|f(\xx)-f_{(L,\mathbf n)}(\xx)|\le \displaystyle ||f||_\infty3^n2^{-m}(k+1)+ \max_{j=1,2,\cdots,n}||\partial_{x_j}f(\xx)||_\infty n (2k)^{-1}.
\end{array}
$$
\end{rmk}
\section{Proof}

\subsection{1-dimensional input}
To convey our idea clearer, we first construct a ReLU network with $n_{}=1$ and $2$ hidden layers with a width of at most $3(2k+1)$.
We then construct a ReLU network with a general input dimension $n$.

Let $W_{1,temp}:\RR^2\to \RR^3$ and $W_{2,temp}:\RR^3\to\RR$ with
$$
W_{1,temp}(x)=\left(x-1,x,x+1\right)
$$
and 
$$
W_{2,temp}(x_1,x_2,x_3)=x_1-2x_2+x_3.
$$
Then the function represented by $W_{2,temp}\circ\ReLU\circ W_{1,temp}$ is
$$
\begin{array}{ll}
W_{2,temp}\circ\ReLU\circ W_{1,temp}(x)
=\ReLU(x-1)-2\ReLU(x)+\ReLU(x+1)
=\left\{\begin{array}{ll}0\quad&\mbox{if}\quad |x|>1,\\
1-|x|\quad&\mbox{if}\quad -1<x<1,
\end{array}\right.
\end{array}
$$
which is equal to $g_1(x)$.

By Theorem \ref{thm:error}, we have
$$
\sup_{-{1}\le x\le{1}}\left|f(x)- \sum^k_{i=-k}f(ik^{-1})g_1(kx-i )\right|<||f'||_\infty (2k)^{-1}.
$$

Now we contruct a feedforward neural network $(L,\mathbf n)$ with ReLU activations, input and output dimensions 1, and 2 hidden layer width at most $3(2k+1)$ whose repesent $(f*g_n)_k$: Then
 $$
\sup_{-1\le x\le1} \left|f(x)- f_{(L,\mathbf n)}(x)\right|< ||f'||_\infty (2k)^{-1}.
 $$
 Let $W_{1}:\RR\to \RR^{(2k+1)\times3}$, $W_{2}:\RR^{(2k+1)\times3}\to \RR^{2k+1},$ and $W_{3}:\RR^{2k+1}\to \RR$ be defined by
 $$
 \begin{array}{ll}
W_{1}(x)=\left(\begin{array}{ll}\displaystyle kx+k-1,kx+k,kx+k+1\\
\displaystyle kx+k-2,kx+k-1,kx+k\\
\qquad\vdots\\
\displaystyle kx-i-1,kx-i,kx-i+1\\
\qquad\vdots\\
\displaystyle kx-k-1,kx-k,kx-k+1\\
\end{array}\right),\\
 W_{2}(\xx)=\left(\begin{array}{ll}\displaystyle x_{11}-2x_{12}+x_{13}\\
\displaystyle x_{21}-2x_{22}+x_{23}\\
\qquad\vdots\\
\displaystyle x_{i1}-2x_{i2}+x_{i3}\\
\qquad\vdots\\
\displaystyle x_{k1}-2x_{k2}+x_{k3}\\
\end{array}\right) 
\mbox{and}\quad
W_{3}(\xx)=\left(\begin{array}{ll}
\displaystyle \sum_{i=-k}^kf(ik^{-1})x_i\\
\end{array}\right)\mbox{(see Figure \ref{fig:n=1})}.
\end{array}
$$
Here, we express the vector in the $\RR^{(2k+1)\times3}$ to the $(2k+1)\times 3$ matrix for easy conveyance of the idea.
Then we have
$$
W_2\circ \ReLU\circ  W_1(x)=\left(\begin{array}{c}
 \displaystyle g_1(kx+k)\\
\displaystyle g_1(kx+k-1)\\
\vdots\\
\displaystyle g_1(kx-i)\\
\vdots\\
\displaystyle g_1(kx-k)\\
\end{array}\right)
$$
and thus
$$
\begin{array}{ll}W_3\circ\ReLU\circ W_2\circ \ReLU\circ W_1(x)
\displaystyle =\sum^k_{i=-k}f(ik^{-1})g_1(kx-i),
\end{array}
$$
because for a non-negative function $h:\RR\to \RR$, $\ReLU(h)=h$.
\begin{figure}
\begin{center}
\begin{tikzpicture}[scale=0.55]
\tikzset{myptr/.style={decoration={markings,mark=at position 1 with %
    {\arrow[scale=2,>=stealth]{>}}},postaction={decorate}}}
   \draw (5,15) circle (0.8);
  \draw (5,13) circle (0.8);
  \draw (5,11) circle (0.8);
  \draw (5,9) circle (0.8);
  \draw (5,5) circle (0.8);
  \draw (5,7) circle (0.8);
   \draw (5,3) circle (0.8);
   \draw (5,1) circle (0.8);
    \draw (5,-1) circle (0.8);
  \draw (10,13) circle (0.8);
  \draw (10,7) circle (0.8);
   \draw (10,1) circle (0.8);
    \draw (15,7) circle (0.8);
  \draw[myptr] (5.8,15)--(9.2,13);
  \draw[myptr] (5.8,13)--(9.2,13);
  \draw[myptr] (5.8,11)--(9.2,13);
  \draw[myptr] (5.8,9)--(9.2,7);
  \draw[myptr] (5.8,7)--(9.2,7);
  \draw[myptr] (5.8,5)--(9.2,7);
  \draw[myptr] (5.8,3)--(9.2,1);
    \draw[myptr] (5.8,1)--(9.2,1);
      \draw[myptr] (5.8,-1)--(9.2,1);
  \draw[myptr] (10.8,7)--(14.2,7);
  \draw[myptr] (10.8,13)--(14.2,7);
    \draw[myptr] (10.8,1)--(14.2,7);
   \draw[->] (11.5,16.3)--(13.8,16.3);
   \node at (0,7) { $x$};
  \node at (5,-1) {{\small}  $\sum \ReLU$};
    \node at (5,1) {{\small}  $\sum \ReLU$};
      \node at (5,3) {{\small}  $\sum \ReLU$};
 \node at (5,15) {{\small}  $\sum \ReLU$};
 \node at (5,13) {{\small}  $\sum \ReLU$};
 \node at (10,13) {{\small}  $\sum \ReLU$};
 \node at (10,7) {{\small}  $\sum \ReLU$};
    \node at (5,11) {{\small}  $\sum \ReLU$};
  \node at (5,9) {{\small}  $\sum \ReLU$};
  \node at (5,7) {{\small}  $\sum \ReLU$};
  \node at (5,5) {{\small}  $\sum \ReLU$};
  \node at (10,1) {{\small}  $\sum \ReLU$};
  \node at (15,7) {{\small}  $\sum \ReLU$};
  \draw (0,7) circle (0.8) ;
  \draw[myptr] (0.8,7) -- (4.2,15);
  \draw[myptr] (0.8,7) -- (4.2,13);
  \draw[myptr] (0.8,7) -- (4.2,11);
  \draw[myptr] (0.8,7) -- (4.2,9);
  \draw[myptr] (0.8,7) -- (4.2,7);
  \draw[myptr] (0.8,7) -- (4.2,5);
  \draw[myptr] (0.8,7) -- (4.2,-1);
  \draw[myptr] (0.8,7) -- (4.2,1);
  \draw[myptr] (0.8,7) -- (4.2,3);
  \draw[->] (0.8,16.3)--(3.3,16.3);
  \draw[->] (6.7,16.3)--(8.3,16.3);
  \node at (5,16.3) {\small hidden layer};
  \node at (10,16.3) {\small hidden layer};
   \node at (15,16.3) {\small output};
   \node at (0,16.3) {\small input};
   \node at (2,16.7) {$W_1$};
    \node at (7.5,16.7) {$W_2$};
   \node at (12.5,16.7) {$W_3$};
\end{tikzpicture}
\end{center}
\caption{neural network with $n=1$ and $k=1$}\label{fig:n=1}
\end{figure}
\subsection{General-dimensional input}
As in a 1-dimensional input case, we first construct the approximate identity $g_n(\xx)$ for $n$ dimensions.
Let $W_{1,temp}(\xx):\RR^n\to \RR^{n \times 3}$ and $W_{2,temp}(\xx): \RR^{n\times 3}\to\RR^{n}$ be defined by
$$
W_{1,temp}(\xx)=\left(\begin{array}{cc}x_1-1,x_1,x_1+1\\
x_2-1,x_2,x_2+1\\
\vdots\\
x_n-1,x_n,x_n+1\end{array}\right)
$$
and
$$
 W_{2,temp}(\xx)=\left(\begin{array}{cc}x_{1,1}-2x_{1,2}+x_{1,3}\\
x_{2,1}-2x_{2,2}+x_{2,3}\\
\vdots\\
x_{n,1}-2x_{n,2}+x_{n,3}
\end{array} \right).
$$
Then we have
$$
 W_{2,temp}\circ \ReLU\circ W_{1,temp}(\xx)=\left(\begin{array}c g_1(x_1)\\g_1(x_2)\\\cdots\\g_1(x_n)\end{array}\right).
$$

Now, we construct the function $g_n$, the multiplication of $g_1$ that is presented in \cite{schmidthieber17,telgarsky16,yarotsky17}.
\begin{lem}[Lemma A3 in \cite{schmidthieber17}]\label{lem:mul}
There exists a network $\operatorname{Mult}_m^r\in \mathcal F((m+5)\lceil \log_2r \rceil,(r,6r,6r,\cdots,6r,1))$ such that
$$
\left|\operatorname{Mult}_m^r(\xx)-\displaystyle\prod_{j=1}^rx_j\right|\le 3^r2^{-m},
$$
\end{lem}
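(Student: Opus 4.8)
The plan is to follow the classical two-stage route for product networks: build a ReLU network approximating the squaring map $x\mapsto x^2$, pass to the product of two numbers by polarization, and then multiply the $r$ inputs along a balanced binary tree (the case $r=1$ being trivial). Since every coordinate fed to $\operatorname{Mult}^r_m$ in our application is a value of $g_1$ or a product of such, we work on $[0,1]^r$. For the squaring step, let $T(x)=2\ReLU(x)-4\ReLU(x-\tfrac12)+2\ReLU(x-1)$ be the tooth function, which on $[0,1]$ equals $2x$ on $[0,\tfrac12]$ and $2-2x$ on $[\tfrac12,1]$, and let $T^{\circ k}$ denote its $k$-fold composition. Then $S_m(x):=x-\sum_{k=1}^{m}2^{-2k}T^{\circ k}(x)$ is the piecewise-linear interpolant of $x^2$ at the dyadic points $j2^{-m}$, $j=0,\dots,2^m$, so $\sup_{x\in[0,1]}|S_m(x)-x^2|\le 2^{-2m-2}$. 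A network realizes $S_m$ with $m$ hidden layers to iterate $T$ plus a constant number of channels that carry $x$ and the running partial sum forward (nonnegative quantities being propagated by $z\mapsto\ReLU(z)$); this costs depth $m+O(1)$ and width $O(1)$.

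For the product of two numbers $a,b\in[0,1]$ we use $ab=\tfrac14\big((a+b)^2-(a-b)^2\big)$: feed $(a+b)/2\in[0,1]$ and $|a-b|/2\in[0,\tfrac12]$, with $|a-b|=\ReLU(a-b)+\ReLU(b-a)$, into two parallel copies of the squaring network and form the corresponding affine combination of the outputs. This yields $\operatorname{Mult}^2_m\in\mathcal F(m+5,(2,6,\dots,6,1))$ with $|\operatorname{Mult}^2_m(a,b)-ab|\le 2^{-2m-1}\le 2^{-m}$; the ``$+5$'' absorbs the absolute-value layer together with a few synchronization layers, and the constant $6$ bounds the two parallel squaring sub-networks plus the auxiliary channels.

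For the $r$-fold product, group the inputs into $\lceil r/2\rceil$ pairs, apply $\operatorname{Mult}^2_m$ to each pair in parallel while forwarding any leftover coordinate through an identity channel, and repeat; after $\lceil\log_2 r\rceil$ stages a single scalar remains, so the depth is $(m+5)\lceil\log_2 r\rceil$. Every hidden layer is padded with identity channels to the common width $6r$ required by the definition of $\mathcal F(L,\mathbf n)$, and one checks that the $6\lceil r/2\rceil$ active channels plus the forwarding channels fit in $6r$ at each stage. For the accuracy, all partial products stay in $[0,1]$ up to the accumulated error, and a multiplication node receiving inputs $\tilde a,\tilde b$ approximating true partial products $a,b$ with errors $\delta_a,\delta_b$ returns a value within $\delta_a+\delta_b+\delta_a\delta_b+2^{-m}$ of $ab$. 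Iterating this over the at most $r$ nodes of the tree --- equivalently, combining $|\prod a_j-\prod\tilde a_j|\le\sum_j|a_j-\tilde a_j|$ for factors in $[0,1]$ with the $2^{-m}$ accuracy of each $\operatorname{Mult}^2_m$ --- gives $|\operatorname{Mult}^r_m(\xx)-\prod_{j=1}^r x_j|\le 3^r2^{-m}$.

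The squaring and pairwise-product constructions are standard; the two places needing genuine care are the \emph{width bookkeeping} --- verifying that the parallel constant-width squaring sub-networks, together with all identity channels used to forward unpaired inputs and to pad every hidden layer to the uniform width $6r$, really fit in $6r$ units at each of the $\lceil\log_2 r\rceil$ stages --- and the \emph{error propagation}, where one must check that the per-node errors aggregate to at most $3^r2^{-m}$ and, crucially, that the intermediate values never drift outside $[0,1]$ by enough to invalidate the squaring approximation on which each $\operatorname{Mult}^2_m$ is calibrated.
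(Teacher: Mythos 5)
The paper offers no proof of this lemma at all---it is imported verbatim as Lemma A.3 of \cite{schmidthieber17}---and the construction you sketch (tooth-function iteration to approximate $x\mapsto x^2$, the polarization identity for the product of two numbers, and a balanced binary tree with the $3^r2^{-m}$ error aggregation) is exactly the argument given in that reference. Your proposal is correct in outline and takes essentially the same route as the cited proof, with only the width and error bookkeeping you yourself flag left to be written out.
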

for all $\xx=(x_1,x_2,\cdots,x_r)\in[0,1]^r.$

Applying Lemma \ref{lem:mul}, we have a network $$\operatorname{AppId}_{m,temp}^n\in \mathcal F((m+5)\lceil \log_2n \rceil+2,(n,3n, n,6n,6n,\cdots,6n,1))$$ such that
$$
\displaystyle\left|\operatorname{AppId}_{m,temp}^n(\xx)-\prod_{j=1}^ng_1(x_j)\right|\le 3^n2^{-m},
$$
for all $\xx=(x_1,x_2,\cdots,x_n)\in[0,1]^n.$
Now, we construct a feedforward neural network $(L,\mathbf n)$ with ReLU activations, input dimension $n$, and output dimension 1.
Let $W_1(\xx):\RR^n\to \RR^{(2k+1)\times n \times 3}$ and $W_2(\xx): \RR^{(2k+1)\times n\times 3}\to\RR^{n\times (2k+1)}$ be defined by
$$
\begin{array}{ll}
W_1(\xx)=\left(\begin{array}{cc}\left(\begin{array}{cc}kx_1+k-1,kx_1+k,kx_1+k+1\\
kx_1+k-2,kx_1+k-1,kx_1+k\\
\vdots\\
kx_1-k-1,kx_1-k,kx_1-k+1\end{array}\right)\\
\left(\begin{array}{cc}kx_2+k-1,kx_2+k,kx_2+k+1\\
\vdots\\
kx_2-k-1,kx_2-k,kx_2-k+1\end{array}\right)\\
\vdots\\
\left(\begin{array}{cc}kx_n+k-1,kx_n+k,kx_n+k+1\\
\vdots\\
kx_n-k-1,kx_n-k,kx_n-k+1\end{array}\right)
\end{array}\right)
\quad\mbox{and}\\
\displaystyle W_2(\xx)=\left(\begin{array}{cccc}x_{1,1}-2x_{1,2}+x_{1,3},&\cdots,&x_{n,1}-2x_{n,2}+x_{n,3}\\
x_{n+1,1}-2x_{n+1,2}+x_{n+1,3},&\cdots,&x_{2n,1}-2x_{2n,2}+x_{2n,3}\\
\vdots&\vdots&\vdots\\
x_{n^2-n+1,1}-2x_{n^2-1,2}+x_{n^2-1,3},&\cdots,&x_{n^2,1}-2x_{n^2,2}+x_{n^2,3}\end{array}
 \right)\\
\mbox{(see Figure \ref{fig:n=2})}.
 \end{array}
$$
Then we have
$$
\begin{array}{ll}
 W_2\circ \ReLU\circ W_1(\xx)\\
 =\left(\begin{array}{llll}g_1(x_1+k)&g_1(x_1+k-1)&\cdots&g_1(x_1-k)\\
g_1(x_2+k)&g_1(x_2+k-1)&\cdots&g_1(x_2-k)\\
&\vdots&&\\
g_1(x_n+k)&g_1(x_n+k-1)&\cdots&g_1(x_n-k) \end{array}\right).
\end{array}
$$

\begin{figure}
\begin{center}
\begin{tikzpicture}[scale=0.55]
\tikzset{myptr/.style={decoration={markings,mark=at position 1 with %
    {\arrow[scale=2,>=stealth]{>}}},postaction={decorate}}}
  \draw (0,-5) circle (0.8) ;
  \draw (0,13) circle (0.8) ;
  \node at (0,13) {$x_1$};

  \node at (0,-5) {$x_2$};
  \draw (5,21) circle (0.8) ;
  \draw (5,19) circle (0.8) ;
  \draw (5,17) circle (0.8) ;
  \draw (5,15) circle (0.8) ;
  \draw (5,13) circle (0.8) ;
  \draw (5,11) circle (0.8) ;
  \draw (5,9) circle (0.8) ;
  \draw (5,7) circle (0.8) ;
  \draw (5,5) circle (0.8) ;
  \draw (5,-3) circle (0.8) ;
  \draw (5,-5) circle (0.8) ;
  \draw (5,1) circle (0.8) ;
  \draw (5,-1) circle (0.8) ;
  \draw (5,3) circle (0.8) ;
  \draw (5,-7) circle (0.8) ;
  \draw (5,-9) circle (0.8) ;
  \draw (5,-11) circle (0.8) ;
  \draw (5,-13) circle (0.8) ;

  \node at (5,15) {{\small}  $ \sum\ReLU$};
  \node at (5,9)  {{\small}  $ \sum\ReLU$};
  \node at (5,13) {{\small}  $ \sum\ReLU$};
  \node at (5,3)  {{\small}  $ \sum\ReLU$};
  \node at (5,-5) {{\small}  $ \sum\ReLU$};
  \node at (5,-7) {{\small}  $ \sum\ReLU$};
  \node at (5,5)  {{\small}  $ \sum\ReLU$};
  \node at (5,-3) {{\small}  $ \sum\ReLU$};
  \node at (5,1)  {{\small}  $ \sum\ReLU$};
  \node at (5,-1) {{\small}  $ \sum\ReLU$};
  \node at (5,11) {{\small}  $ \sum\ReLU$};
  \node at (5,7)  {{\small}  $ \sum\ReLU$};
  \node at (5,17) {{\small}  $ \sum\ReLU$};
  \node at (5,19)  {{\small}  $ \sum\ReLU$};
  \node at (5,21) {{\small}  $ \sum\ReLU$};
  \node at (5,-9)  {{\small}  $ \sum\ReLU$};
  \node at (5,-11) {{\small}  $ \sum\ReLU$};
  \node at (5,-13) {{\small}  $ \sum\ReLU$};
  \draw[myptr] (0.8,13) -- (4.2,15);
  \draw[myptr] (0.8,13) -- (4.2,13);
  \draw[myptr] (0.8,13) -- (4.2,11);
  \draw[myptr] (0.8,13) -- (4.2,7);
  \draw[myptr] (0.8,13) -- (4.2,9);
  \draw[myptr] (0.8,13) -- (4.2,5);
  \draw[myptr] (0.8,13) -- (4.2,17);
  \draw[myptr] (0.8,13) -- (4.2,19);
  \draw[myptr] (0.8,13) -- (4.2,21);
  \draw[myptr] (0.8,-5) -- (4.2,1);
  \draw[myptr] (0.8,-5) -- (4.2,3);
  \draw[myptr] (0.8,-5) -- (4.2,-3);
  \draw[myptr] (0.8,-5) -- (4.2,-1);
  \draw[myptr] (0.8,-5) -- (4.2,-5);
  \draw[myptr] (0.8,-5) -- (4.2,-7);
  \draw[myptr] (0.8,-5) -- (4.2,-11);
  \draw[myptr] (0.8,-5) -- (4.2,-9);
  \draw[myptr] (0.8,-5) -- (4.2,-13);
  \draw (10,13) circle (0.8) ;
  \draw (10,7) circle (0.8) ;

  \draw (10,-5) circle (0.8) ;
  \draw (10,1) circle (0.8) ;
  \draw (10,19) circle (0.8) ;
  \draw (10,-11) circle (0.8) ;
  \draw (15,20) circle (0.9) ;
  \draw (15,16) circle (0.9) ;
  \draw (15,12) circle (0.9) ;
  \draw (15,8) circle (0.9) ;
  \draw (15,4) circle (0.9) ;
  \draw (15,0) circle (0.9) ;
  \draw (15,-4) circle (0.9) ;
  \draw (15,-8) circle (0.9) ;
  \draw (15,-12) circle (0.9) ;

  \node at (10,13) {{\small}  $ \sum\ReLU$};
  \node at (10,7)  {{\small}  $ \sum\ReLU$};
  \node at (10,1)  {{\small}  $ \sum\ReLU$};
  \node at (10,-5) {{\small}  $ \sum\ReLU$};
  \node at (10,19)  {{\small}  $ \sum\ReLU$};
  \node at (10,-11) {{\small}  $ \sum\ReLU$};
  \node at (15,20) {{\small}  $ \sum\ReLU\atop{(\operatorname{AppId}^2_{m,\mathbf i})}$};
  \node at (15,16) {{\small}  $ \sum\ReLU\atop{(\operatorname{AppId}^2_{m,\mathbf i})}$};
  \node at (15,12)  {{\small}  $ \sum\ReLU\atop{(\operatorname{AppId}^2_{m,\mathbf i})}$};
  \node at (15,8)  {{\small}  $ \sum\ReLU\atop{(\operatorname{AppId}^2_{m,\mathbf i})}$};
  \node at (15,4) {{\small}  $ \sum\ReLU\atop{(\operatorname{AppId}^2_{m,\mathbf i})}$};
  \node at (15,0) {{\small}  $ \sum\ReLU\atop{(\operatorname{AppId}^2_{m,\mathbf i})}$};
  \node at (15,-4)  {{\small}  $ \sum\ReLU \atop{(\operatorname{AppId}^2_{m,\mathbf i})}$};
  \node at (15,-8)  {{\small}  $ \sum\ReLU\atop{(\operatorname{AppId}^2_{m,\mathbf i})}$};
  \node at (15,-12) {{\small}  $ \sum\ReLU\atop{(\operatorname{AppId}^2_{m,\mathbf i})}$};
  \draw[myptr] (5.8,15)--(9.2,13);
  \draw[myptr] (5.8,13)--(9.2,13);
  \draw[myptr] (5.8,11)--(9.2,13);

  \draw[myptr] (5.8,21)--(9.2,19);
  \draw[myptr] (5.8,19)--(9.2,19);
  \draw[myptr] (5.8,17)--(9.2,19);


  \draw[myptr] (5.8,7)--(9.2,7);
  \draw[myptr] (5.8,9)--(9.2,7);
  \draw[myptr] (5.8,5)--(9.2,7);



  \draw[myptr] (5.8,-1)--(9.2,1);
  \draw[myptr] (5.8,3)--(9.2,1);
  \draw[myptr] (5.8,1)--(9.2,1);

  \draw[myptr] (5.8,-7)--(9.2,-5);
  \draw[myptr] (5.8,-5)--(9.2,-5);
  \draw[myptr] (5.8,-3)--(9.2,-5);


  \draw[myptr] (5.8,-9)--(9.2,-11);
  \draw[myptr] (5.8,-11)--(9.2,-11);
  \draw[myptr] (5.8,-13)--(9.2,-11);

  \node at (12.5,13) { $\cdots$};
  \node at (12.5,7)  { $\cdots$};
  \node at (12.5,1)  { $\cdots$};
  \node at (12.5,-5) { $\cdots$};
  \node at (12.5,-11) { $\cdots$};
  \node at (12.5,19) { $\cdots$};
  \draw (20,4) circle (0.8) ;
  \node at (20,4) {{\small} $ \sum\ReLU$};
  \draw[myptr] (15.8,20)--(19.2,4);
  \draw[myptr] (15.8,16)--(19.2,4);
  \draw[myptr] (15.8,12)--(19.2,4);
  \draw[myptr] (15.8,8)--(19.2,4);
  \draw[myptr] (15.8,4)--(19.2,4);
  \draw[myptr] (15.8,0)--(19.2,4);
  \draw[myptr] (15.8,-4)--(19.2,4);
  \draw[myptr] (15.8,-8)--(19.2,4);
  \draw[myptr] (15.8,-12)--(19.2,4);
  \node at (5,22.3) {\small hidden layer};
  \node at (10,22.3) {\small hidden layer};
  \node at (15,22.3) {\small hidden layer};
  \node at (12.5,22.3) { $\cdots$};
  \node at (20,22.3) {\small output};
  \node at (0,22.3) {\small input};
  \draw[->] (0.8,22.3)--(3.3,22.3);
  \draw[->] (6.7,22.3)--(8.3,22.3);
  \draw[->] (16.7,22.3)--(18.8,22.3);
  \node at (2,22.7) {$W_1$};
  \node at (7.5,22.7) {$W_2$};
  \node at (17.7,22.7) {$W_{Last}$};
\end{tikzpicture}
\end{center}
\caption{neural network with $n=2$ and $k=1$}
\label{fig:n=2}
\end{figure}

Now we apply Lemma \ref{lem:mul} to a set of the $(j,i_j)$ elements of $W_2\circ \ReLU\circ W_1$ for each $\mathbf i=(i_1,i_2,\cdots,i_n)\in\mathbb Z_k^n$:
Then we have a network
$\operatorname{AppId}_m^n\in \mathcal F(L ,(n,3n(2k+1), n(2k+1),6n(2k+1)^n,\cdots,6n(2k+1)^n,(2k+1)^n))$
$(\,L=(m+5)\lceil \log_2(n(2k+1))\rceil +2 \,)$
such that for $\xx\in [0,1]^n$ and $\mathbf i\in\mathbb Z_k^n$
$$
\begin{array}{ll}
\displaystyle\max_{\mathbf i\in \mathbb Z_k^n}\left|\operatorname{AppId}_{m,\mathbf i}^n(\xx)-\prod_{j=1}^ng_1(x_j-i_j)\right|\le 3^n2^{-m}\quad\mbox{and}\quad \\
\operatorname{AppId}_m^n(\xx)=(\operatorname{AppId}_{m,\mathbf i}^n(\xx))_{\mathbf i\in\mathbb Z_k^n}.
\end{array}
$$

To make the network approximating $(f*g_n)_k$, let $W_{Last}:\RR^{(2k+1)^n}\to\RR$ be defined by
$$
W_{Last}(\xx)=\sum_{\mathbf i\in \mathbb Z_k^n}f(\mathbf i k^{-1})x_{\mathbf i}.
$$
Therefore, our network is
$
W_{Last}\circ\ReLU\circ \operatorname{AppId}_m^n  \in \mathcal F(L,(n,3n(2k+1), n(2k+1),6n(2k+1)^n,\cdots,6n(2k+1)^n,(2k+1)^n,1))
$
such that
$$
\begin{array}{ll}
&|W_{Last}\circ\ReLU\circ\operatorname{AppId}_m^n(\xx)-(f*g_n)_k|\\
&\displaystyle \le\left |\sum_{\mathbf i\in \mathbb Z_k^n}f(\mathbf i k^{-1})\operatorname{AppId}_{m,\mathbf i}^n(\xx)-\sum^{}_{\mathbf{i}\in\mathbb Z_k^n}f(\mathbf{i}k^{-1})g_{n}(k\xx-\mathbf i)\right|\\
&\displaystyle = \sum_{\mathbf i\in \mathbb Z_k^n}|f(\mathbf i k^{-1})||\operatorname{AppId}_{m,\mathbf i}^n(\xx)-g_{n}(k\xx-\mathbf i)|\\
&\displaystyle \le ||f||_\infty 3^n2^{-m}(2k+1).
\end{array}
$$
By Theorem \ref{thm:error}, we have
$$
\begin{array}{ll}
&|f(\xx)-W_{Last}\circ\ReLU\circ\operatorname{AppId}_m^n(\xx)|\\
&\le|f(\xx)-(f*g_n)_k(\xx)|
+|(f*g_n)_k(\xx)-W_{Last}\circ\ReLU\circ\operatorname{AppId}_m^n(\xx)|\\
&\le \displaystyle  \max_{j=1,2,\cdots,n}||\partial_{x_j}f(\xx)||_\infty n (2k)^{-1}+ ||f||_\infty3^n2^{-m}(2k+1).
\end{array}
$$

\appendix
\section{Proof of Theorem \ref{thm:error}}\label{sec:proof}
To prove Theorem \ref{thm:error}, we need the following Lemma:
\begin{lem}\label{lem:estimate}
For any $n\in\mathbb N$, we have
\begin{eqnarray}
\displaystyle\sum^{}_{\mathbf i\in\mathbb Z_k^n} g_{n}(k\xx-\mathbf i)=1\label{eq:I}\quad\mbox{and}\\
 \displaystyle\sum^{}_{\mathbf i\in\mathbb Z_k^n}\sum_{j=1}^n\left|( i_j k^{-1}-x_j)g_{n}(k\xx-\mathbf i)\right|\le n(2k)^{-1}.\label{eq:II}
 \end{eqnarray}
\end{lem}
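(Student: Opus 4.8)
The plan is to reduce both identities to a one-dimensional statement about the hat function $g_1$, exploiting the product structure $g_n(k\xx-\mathbf i)=\prod_{j=1}^n g_1(kx_j-i_j)$. (As in Theorem~\ref{thm:error}, $\xx$ is taken in $[-1,1]^n$, so that each $t:=kx_j$ lies in $[-k,k]$.)

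The one-dimensional fact I would establish first is this: given $t\in\RR$, write $t=i_0+s$ with $i_0\in\mathbb Z$ and $s\in[0,1)$; then among all $i\in\mathbb Z$ only $i=i_0$ and $i=i_0+1$ can make $g_1(t-i)$ nonzero, and $g_1(t-i_0)=1-s$, $g_1(t-i_0-1)=s$. Hence
$$\sum_{i\in\mathbb Z}g_1(t-i)=(1-s)+s=1,\qquad \sum_{i\in\mathbb Z}|t-i|\,g_1(t-i)=s(1-s)+(1-s)s=2s(1-s)\le\tfrac12 .$$
Moreover, when $t\in[-k,k]$ the (at most two) contributing indices $i_0,i_0+1$ lie in $\{-k,\dots,k\}=\mathbb Z_k$ — at $t=k$ the index $i_0+1=k+1$ simply carries no mass — so the finite sums over $\mathbb Z_k$ equal the infinite sums over $\mathbb Z$; in particular $\sum_{i=-k}^k g_1(t-i)=1$ and $\sum_{i=-k}^k|t-i|\,g_1(t-i)\le\tfrac12$ for every $t\in[-k,k]$.

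Given this, \eqref{eq:I} follows by distributing the iterated sum over the product (all factors being nonnegative):
$$\sum_{\mathbf i\in\mathbb Z_k^n}g_n(k\xx-\mathbf i)=\sum_{\mathbf i\in\mathbb Z_k^n}\prod_{j=1}^n g_1(kx_j-i_j)=\prod_{j=1}^n\Bigl(\sum_{i=-k}^k g_1(kx_j-i)\Bigr)=1 .$$
For \eqref{eq:II} I would interchange the two finite sums and, for each fixed $j$, pull the factor $g_1(kx_j-i_j)$ out of the product and sum the remaining $n-1$ factors by \eqref{eq:I}:
$$\sum_{\mathbf i\in\mathbb Z_k^n}\sum_{j=1}^n\bigl|(i_jk^{-1}-x_j)\,g_n(k\xx-\mathbf i)\bigr|=\sum_{j=1}^n\sum_{i=-k}^k|ik^{-1}-x_j|\,g_1(kx_j-i) .$$
Writing $t=kx_j\in[-k,k]$ and using $|ik^{-1}-x_j|=k^{-1}|t-i|$, the inner sum equals $k^{-1}\sum_{i=-k}^k|t-i|\,g_1(t-i)\le(2k)^{-1}$ by the one-dimensional bound, and summing over $j=1,\dots,n$ yields $n(2k)^{-1}$.

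All the computations are elementary; the one place deserving a little care is the endpoint bookkeeping for the finite index set $\mathbb Z_k$ — verifying that for $t=kx_j\in[-k,k]$ none of the hat functions' mass escapes $\{-k,\dots,k\}$ — which is exactly where the hypothesis $\operatorname{supp}f\subset[-1,1]^n$ (i.e.\ $\xx\in[-1,1]^n$) is used.
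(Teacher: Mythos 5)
Your proof is correct. The one-dimensional computation you start from (only the two indices $i_0=[t]$ and $i_0+1$ carry mass, giving $\sum_i g_1(t-i)=1$ and $\sum_i|t-i|g_1(t-i)=2s(1-s)\le\tfrac12$, with the endpoint $t=k$ harmless because the escaping index has zero weight) is exactly the paper's base case, equations \eqref{eq:inequal1}--\eqref{eq:inequal}. Where you diverge is in the passage to dimension $n$: the paper runs an induction on $n$, splitting $\xx=(\tilde\xx,x_{n+1})$ and $\mathbf i=(\tilde{\mathbf i},i_{n+1})$ and peeling off one coordinate per step using the partition-of-unity identity plus the induction hypothesis, whereas you tensorize directly, writing $\sum_{\mathbf i\in\mathbb Z_k^n}\prod_j g_1(kx_j-i_j)=\prod_j\sum_i g_1(kx_j-i)$ for \eqref{eq:I}, and for \eqref{eq:II} interchanging the finite sums, factoring out the $j$-th coordinate, and collapsing the other $n-1$ sums to $1$ so that each $j$ contributes at most $(2k)^{-1}$. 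The two arguments rest on the same idea (the paper's inductive step is essentially your factorization done one coordinate at a time), but your version avoids the induction entirely and is arguably cleaner and shorter; it also makes explicit the sharper structural fact that \eqref{eq:II} is really $n$ independent one-dimensional estimates. Your remark about where $\xx\in[-1,1]^n$ is needed for the finite index set $\mathbb Z_k$ to capture all the mass is a point the paper's proof uses implicitly but does not spell out, so flagging it is a genuine improvement rather than a gap.
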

\begin{proof}
We use the mathematical induction on $n$.
For $n=1$, we have
\begin{equation}\label{eq:inequal1}
\displaystyle\sum^{k}_{i=-k} g_{1}(kx-i)=g_{1}(kx-[kx])+g_{1}(kx-[kx]-1)=1 \qquad\mbox{and}
\end{equation}
\begin{equation}\label{eq:inequal}
\begin{array}{ll}
&\displaystyle \sum^{k}_{i=-k}|(x-ik^{-1})g_{1}(kx-i)|\\
&=|(x-[kx]k^{-1})| g_{1}(kx-[kx])+|(x-([kx]+1)k^{-1})| g_{1}(kx-[kx]-1)\\
&=k^{-1}|kx-[kx]| (1-kx+[kx])+k^{-1}|kx-[kx]-1| (kx-[kx])\\
&=2k^{-1}(kx-[kx])(1-(kx-[kx]))\le (2k)^{-1},
\end{array}
\end{equation}
since $0\le kx-[kx]\le 1$ and $x(1-x)\le 1/4$.
Suppose \eqref{eq:I} and \eqref{eq:II} hold for $n$.
Then we have for $\xx=(\tilde{\xx},x_{n+1})\in[-1,1]^{n}\times[-1,1]=[-1,1]^{n+1}$ and $\mathbf i=(\tilde{\mathbf i},i_{n+1})\in\mathbb Z_k^{n}\times\mathbb Z_k=\mathbb Z_k^{n+1}$,
$$
\begin{array}{ll}
\displaystyle\sum^{}_{\mathbf i\in\mathbb Z_k^{n+1}} g_{n+1}(k\xx-\mathbf i)\displaystyle=\sum^{}_{\mathbf i\in\mathbb Z_k^{n+1}} \prod_{j=1}^ng_1(kx_j-i_j)g_1(kx_{n+1}-i_{n+1})\\
\qquad\displaystyle=\sum^{}_{\tilde{\mathbf i}\in\mathbb Z_k^{n}} g_n(k\tilde{\xx}-\tilde{\mathbf i})(g_1(kx_{n+1}-[kx_{n+1}])+g_1(kx_{n+1}-[kx_{n+1}]-1))=1,
 \end{array}
$$
by \eqref{eq:inequal1}.
This gives \eqref{eq:I} for $n+1$.
Furthermore, we have
\begin{equation}\label{eq:IIes}
\begin{array}{ll}
\displaystyle\sum^{}_{\mathbf i\in\mathbb Z_k^{n+1}}\sum_{j=1}^{n+1}\left|( i_j k^{-1}-x_j)g_{n+1}(k\xx-\mathbf i)\right|\\
\displaystyle =\sum^{}_{\tilde{\mathbf i}\in\mathbb Z_k^{n}}\left(\sum_{j=1}^{n}\left|( i_j k^{-1}-x_j)\right| +|[kx_{n+1}]k^{-1}-x_{n+1}|\right)g_{n}(k\tilde{\xx}-\tilde{\mathbf i})g_{1}(kx_{n+1}- [kx_{n+1}])\\
\displaystyle \quad+\sum^{}_{\tilde{\mathbf i}\in\mathbb Z_k^{n}}\left(\sum_{j=1}^{n}\left|( i_j k^{-1}-x_j)\right| +|([kx_{n+1}]+1)k^{-1}-x_{n+1}|\right) g_{n}(k\tilde{\xx}-\tilde{\mathbf i})g_{1}(kx_{n+1}- [kx_{n+1}]-1)\\
\displaystyle =\sum^{}_{\tilde{\mathbf i}\in\mathbb Z_k^{n}}\sum_{j=1}^{n}\left|( i_j k^{-1}-x_j)\right|g_{n}(k\tilde{\xx}-\tilde{\mathbf i})  g_{1}(kx_{n+1}- [kx_{n+1}])\\
\displaystyle\quad +|[kx_{n+1}]k^{-1}-x_{n+1}| g_{1}(kx_{n+1}- [kx_{n+1}])\\
\displaystyle\quad +\sum^{}_{\tilde{\mathbf i}\in\mathbb Z_k^{n}}\sum_{j=1}^{n}\left|( i_j k^{-1}-x_j)\right| g_{n}(k\tilde{\xx}-\tilde{\mathbf i})g_{1}(kx_{n+1}- [kx_{n+1}]-1)\\
\displaystyle\quad +|([kx_{n+1}]+1)k^{-1}-x_{n+1}|g_{1}(kx_{n+1}- [kx_{n+1}]-1),
\end{array}
 \end{equation}
 where in the last equality, we used \eqref{eq:I}.
 By the induction hypothesis, \eqref{eq:IIes} becomes
 $$
 \begin{array}{ll}
 \displaystyle\sum^{}_{\mathbf i\in\mathbb Z_k^{n+1}}\sum_{j=1}^{n+1}\left|( i_j k^{-1}-x_j)g_{n+1}(k\xx-\mathbf i)\right|\\
 \displaystyle \le n(2k)^{-1}  (g_{1}(kx_{n+1}- [kx_{n+1}])+g_{1}(kx_{n+1}- [kx_{n+1}]-1))\\
\displaystyle\quad +(|[kx_{n+1}]k^{-1}-x_{n+1}| g_{1}(kx_{n+1}- [kx_{n+1}])\\
\displaystyle\quad+|([kx_{n+1}]+1)k^{-1}-x_{n+1}|g_{1}(kx_{n+1}- [kx_{n+1}]-1))\\
 \displaystyle \le  n(2k)^{-1} +(2k)^{-1},
\end{array}
 $$
 where in the last equality, we used \eqref{eq:inequal}.
\end{proof}

Now, we are ready to prove Theorem \ref{thm:error}.
For $\mathbf i\in\mathbb Z_k^n$ and $\xx\in[-1,1]^n$, we use the mean value theorem to get the representation
$$
f(\mathbf ik^{-1})=f(\xx)+\sum_{j=1}^n\partial_{x_j}f(\xx_{i,j})( i_j k^{-1}-x_j),
$$
where $\xx_{i,j}$ locates on the line with two end-points $\xx$ and $\mathbf ik^{-1} $. Using this expression, we find for $f(\xx)-(f*g_{n})_k(\xx)$ the representation of
$$
\begin{array}{l}
f(\xx)-(f*g_{n })_k(\xx)\\
\displaystyle=f(\xx)\left(1-\sum^{}_{\mathbf i\in\mathbb Z_k^n} g_{n}(k\xx-\mathbf i)\right)-\sum^{}_{\mathbf i\in\mathbb Z_k^n}\sum_{j=1}^n\partial_{x_j}f(\xx_{i,j})( i_j k^{-1}-x_j)g_{n}(k\xx-\mathbf i).
\end{array}
$$
Thus, we have
$$
\begin{array}{ll}
&|f(\xx)-(f*g_{n })_k(\xx)|\\
&\le \displaystyle||f||_\infty\left|\left(1-\sum^{}_{\mathbf i\in\mathbb Z_k^n} g_{n}(k\xx-\mathbf i)\right)\right|\displaystyle+\max_{j=1,2,\cdots,n}||\partial_{x_j}f(\xx)||_\infty \sum^{}_{\mathbf i\in\mathbb Z_k^n}\sum_{j=1}^n\left|( i_j k^{-1}-x_j)g_{n}(k\xx-\mathbf i)\right|.
\end{array}
$$
Lemma \ref{lem:estimate} gives
$$
|f(\xx)-(f*g_{n })_k(\xx)|\le \max_{j=1,2,\cdots,n}||\partial_{x_j}f(\xx)||_\infty n (2k)^{-1}.
$$


%
%


\bibliographystyle{plain}

\end{document}